\pdfminorversion=7
\documentclass[letterpaper, 10 pt, conference, twoside]{ieeeconf}
\IEEEoverridecommandlockouts
\overrideIEEEmargins
\usepackage{graphicx}
\usepackage{float}
\usepackage{lscape}
\usepackage{rotating}
\graphicspath{{Figures/}}
\usepackage{algorithm}
\usepackage{algpseudocode}
\usepackage{comment}
\usepackage{regexpatch}
\usepackage{multirow}
\usepackage{multicol}
\usepackage{booktabs}
\usepackage{paralist}

\usepackage{enumitem}
\usepackage[flushleft]{threeparttable}

\usepackage{amssymb,amsmath,amsfonts}
\usepackage{array}
\usepackage{bm}
\usepackage{commath}
\usepackage{savesym}
\usepackage[normalem]{ulem}
\usepackage{units}
\usepackage{dsfont}
\usepackage{soul}
\usepackage{xspace}
\usepackage{setspace}
\usepackage{afterpage}
\usepackage{microtype}
\usepackage{silence}
\WarningFilter{caption}{Unsupported document class}
\pdfsuppresswarningpagegroup=1
\usepackage[noadjust]{cite}
\makeatletter
\let\NAT@parse\undefined
\makeatother
\usepackage[hidelinks]{hyperref}
\usepackage{cleveref}
\usepackage[all]{hypcap}
\usepackage{balance}
\bibliographystyle{IEEEtran}

\newcommand{\trs}{\top}

\renewcommand{\b}{\mathbf b}

\newcommand{\p}{\mathbf p}

\newcommand{\w}{\mathbf w}
\newcommand{\x}{\mathbf x}
\newcommand{\z}{\mathbf z}
\newcommand{\EE}{\mathbb E}
\newcommand{\RR}{\mathbb R}
\newcommand{\A}{\mathcal A}
\newcommand{\B}{\mathcal B}
\newcommand{\D}{\mathcal D}
\newcommand{\E}{\mathcal E}

\newcommand{\I}{\mathcal I}
\newcommand{\K}{\mathcal K}
\renewcommand{\L}{\mathcal L}
\newcommand{\M}{\mathcal M}
\newcommand{\N}{\mathcal N}
\renewcommand{\O}{\mathcal O}
\renewcommand{\P}{\mathcal P}

\newcommand{\R}{\mathcal R}
\renewcommand{\S}{\mathcal S}
\newcommand{\T}{\mathcal T}
\newcommand{\V}{\mathcal V}
\newcommand{\X}{\mathcal X}

\newcommand{\mc}[1]{\mathcal{#1}}
\newcommand{\one}{{\mathds 1}}
\newcommand{\given}{{\,\vert\,}}
\newcommand{\lb}{\left(}
\newcommand{\rb}{\right)}
\newcommand{\lbb}{\left[}
\newcommand{\rbb}{\right]}
\newcommand{\lbbb}{\left\{}
\newcommand{\rbbb}{\right\}}
\crefname{equation}{}{}
\renewcommand{\algorithmiccomment}[1]{\bgroup\hfill$\triangleright$~#1\egroup}

\DeclareMathOperator*{\argmax}{arg\,max}

\let\oldforall\forall
\let\forall\undefined
\DeclareMathOperator{\forall}{\oldforall}
\let\oldexists\exists
\let\exists\undefined
\DeclareMathOperator{\exists}{\oldexists}
\usepackage{amsthm}
\theoremstyle{plain}\newtheorem{theorem}{Theorem}
\theoremstyle{remark}\newtheorem{remark}{Remark}[section]
\theoremstyle{definition}
\theoremstyle{definition}\newtheorem{definition}{Definition}

\renewcommand{\figref}[1]{Fig.~\ref{#1}}
\newcommand{\tabref}[1]{Table~\ref{#1}}

\renewcommand{\algref}[1]{Algorithm~\ref{#1}}

\usepackage{subfigure}

\title{Monitoring Over the Long Term:\\Intermittent Deployment and Sensing Strategies for Multi-Robot Teams}
\author{Jun Liu and Ryan K. Williams%
\thanks{This work was supported by the National Institute of Food and Agriculture under Grant 2018-67007-28380.}
\thanks{The authors are with the Department of Electrical and Computer Engineering, Virginia Polytechnic Institute and State University, Blacksburg, VA 24061 USA
(e-mail: \href{mailto:junliu@vt.edu}{junliu@vt.edu}; \href{mailto:rywilli1@vt.edu}{rywilli1@vt.edu}).}}

\begin{document}

\bstctlcite{IEEEexample:BSTcontrol}
\maketitle
\thispagestyle{empty}
\pagestyle{empty}

\begin{abstract}
    In this paper, we formulate and solve the intermittent deployment problem, which yields strategies that couple \emph{when} heterogeneous robots should sense an environmental process, with \emph{where} a deployed team should sense in the environment.  As a motivation, suppose that a spatiotemporal process is slowly evolving and must be monitored by a multi-robot team, e.g., unmanned aerial vehicles monitoring pasturelands in a precision agriculture context.  In such a case, an intermittent deployment strategy is necessary as persistent deployment or monitoring is not cost-efficient for a slowly evolving process.  At the same time, the problem of where to sense once deployed must be solved as process observations yield useful feedback for determining effective future deployment and monitoring decisions. In this context, we model the environmental process to be monitored as a spatiotemporal Gaussian process with mutual information as a criterion to measure our understanding of the environment.
    To make the sensing resource-efficient, we demonstrate how to use matroid constraints to impose a diverse set of homogeneous and heterogeneous constraints. In addition, to reflect the cost-sensitive nature of real-world applications, we apply budgets on the cost of deployed heterogeneous robot teams. To solve the resulting problem, we exploit the theories of submodular optimization and matroids and present a greedy algorithm with bounds on sub-optimality. Finally, Monte Carlo simulations demonstrate the correctness of the proposed method.
\end{abstract}

\section{Introduction}

Multi-robot systems have a natural advantage over a single robot when tasks, especially monitoring, are distributed over space or in time. More robots means large environments can be monitored more effectively, and observations can be made with finer granularity.  These advantages have been seen in various established methods for environmental sensing/modeling \cite{smith2012persistent,lan2013planning,yu2016correlated,zhang2013amcl,liu2013square}.
While these methods are certainly effective, we point out that they are all conditioned on the underlying assumption that \emph{a robotic team should be continuously deployed for a given task}. That is, the majority of multi-robot problems belong to the post-deployment category, meaning we have already made our decision to deploy a multi-robot system for the desired task. Our suggestion in this paper is that in addition to considering the post-deployment problem, we must consider the deployment problem itself since little attention has been given to the concept of deploying multi-robot teams intermittently. Specifically, we want to answer the question ``\emph{when is it the best time to deploy our robots and once deployed where should they sense?}''  In particular, we are motivated by problems from the precision agriculture context where a slowly evolving environmental process must be observed, and thus persistent monitoring is not cost-effective.

\begin{figure}[!t]
    \centering
    \includegraphics[width=.9\linewidth]{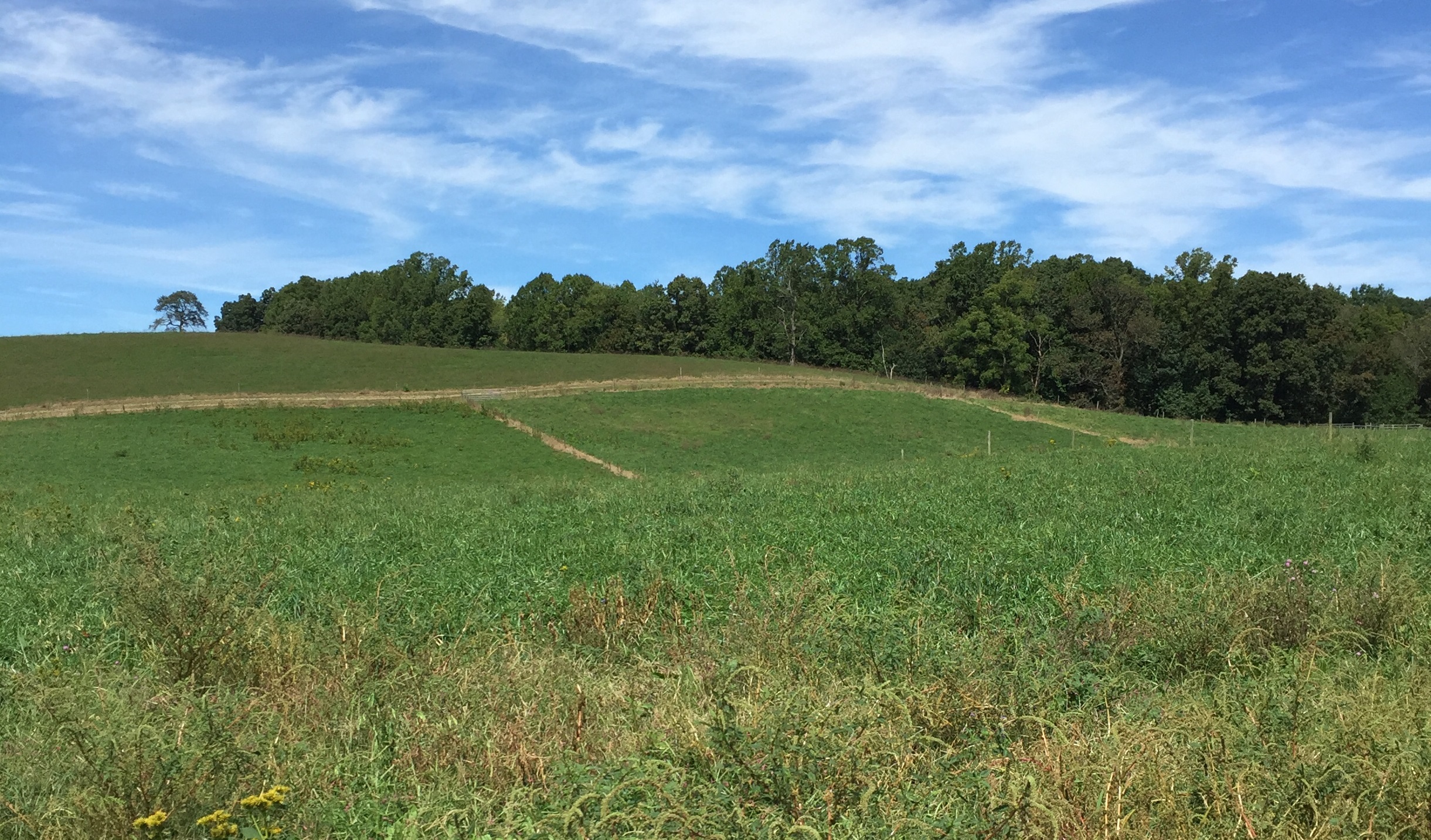}
    \caption{An example of a slowly evolving spatiotemporal process, pastureland forage, that needs to be monitored \cite{das2015devices,franzluebbers2012well,liu2018data,liu2020coupled} for effective land management.}
    \label{fig: pasturelands}
\end{figure}

When our objective is to monitor a slowly evolving environmental process, such as forage in pastureland (\figref{fig: pasturelands}), we cannot deploy robots frequently as the difference in environmental information between deployments may be negligible. Instead, enough time must elapse so that sufficient variation in sensing information allows for an appropriate balance with the cost of deploying a robot team. Indeed, we require an intermittent deployment strategy that defines when it is appropriate to deploy a robotic team, or in other words, how often they should be deployed.  Such a strategy should account for the cost of using differently composed robot teams and, if necessary, should not exceed a cost budget.  At the same time, another problem that couples with the intermittent deployment problem is the sensing location selection problem \cite{krause2008near,joshi2008sensor}. We not only need to know when to deploy our robots, but we also need to decide where these robots should sense once deployed. The reason for this coupling is that effective deployment timing yields a high sensing accuracy without excessive cost while selecting informative sensing locations aids in reducing the deployment needs while maintaining a high sensing accuracy. Since these two problems are naturally coupled, we propose to solve them simultaneously.

A motivating application for this work is precision pastureland management \cite{corson2007evaluating,zhai2006modeling}. Pasturelands are an integral part of agricultural production. To take full advantage of the forage resource, we should monitor pasturelands in a manner that is both cost-effective and sufficiently informative for practitioners. The scale of pasturelands is usually large, and the evolution of forage processes is generally very slow.
Hence, we argue our above-described deployment and sensing strategy is ideally suited for precision pastureland management, as well as problems with similar characteristics such as ocean monitoring \cite{smith2011persistent} or adversarial surveillance \cite{egorov2016target}.

\emph{Related Work:}
The idea of intermittence appears in certain robotic applications. In \cite{sinopoli2004kalman}, the authors demonstrated the convergence of the covariance matrix for Kalman filtering when the observations arrive stochastically. In \cite{hollinger2010multi}, the robots are required to communicate intermittently but with a pre-defined interval before the actual deployment. In contrast, we require the sensing interval to be determined intrinsically by the environment instead of artificially defined a priori. The work in \cite{gini2017multi,melvin2007multi,savelsbergh1985local,mcintire2016iterated} considered time window and precedence constraints in task allocation problems. However, our problem is more complex as an environmental process must be observed. Moreover, our motivation is also contrary to the idea of persistent monitoring \cite{lan2013planning,yu2016correlated}, which is usually not energy efficient (albeit necessary for fast-evolving processes).

Our proposed problem can be considered in the realm of (partially observable) Markov Decision Processes ((PO)MDPs) \cite{puterman2014markov}. Under this framework, \cite{shiryaev1963optimum} proposed an optimal stopping method yielding plans for when to observe an MDP. \cite{lovejoy1987some,krishnamurthy2013schedule} proposed a method for solving the quickest detection/estimation problem with observation uncertainty. Similar work can also be found in \cite{liu2018optimal}. However, as these frameworks require known state transition probabilities and measurement probabilities, their applicability is limited in real-world applications. This paper aims to make the intermittent deployment and sensing problem more general by removing these constraints. Moreover, heterogeneous robot teams and more complex constraints in time are considered in the paper.

Formally, we utilize Gaussian processes \cite{williams2006gaussian} to learn and predict the environment, and use a submodular objective function, mutual information \cite{krause2008near}, to measure the information content we anticipate to gather when deploying robot teams. To reflect the intermittent deployment requirements, we propose to use matroids \cite{oxley2006matroid,liu2019submodular} to impose both homogeneous (all robots) and heterogeneous (robot-specific) constraints.
Additionally, both homogeneous and heterogeneous budget constraints are considered to limit the cost of deployed robot teams. In general, our problem becomes a submodular optimization problem under matroid intersection and knapsack constraints. Generally, submodular maximization is hard even without any constraints \cite{schrijver2003combinatorial}. Multi-linear extension methods \cite{chekuri2014submodular} use a continuous greedy method \cite{vondrak2008optimal} with an approximation bound. However, the running time of these methods is not practical. \cite{badanidiyuru2014fast} proposes a modified greedy method which is an interpolation between the traditional greedy method \cite{fisher1978analysis} and the continuous greedy method \cite{vondrak2008optimal}. It requires less running time to solve the submodular maximization problem with matroid intersection and knapsack constraints. In this paper, we will demonstrate how to utilize this method to solve the intermittent deployment problem.

\emph{Contributions:} In summary, the contributions are as follows:

\begin{enumerate}[label=\arabic*)]
    \item We formalize the intermittent deployment problem for multi-robot systems while considering heterogeneity.
    \item We demonstrate how to utilize matroids to model novel homogeneous and heterogeneous constraints for deployment over time.
    \item We demonstrate how to solve the problem efficiently and provide optimality and complexity analyses.
\end{enumerate}

\section{Preliminaries}
\label{sec: preliminaries}

\subsection{Combinatorial Optimization}
\label{ssec: submodular_matroid}
A set function $f: 2^\V \mapsto \RR$ is a function that maps every subset $\A \subseteq \V$ into a real value $\RR$, and the set $\V$ is usually called the ground set. Related to set functions, we mainly focus on the following properties in this paper.
\begin{definition}[\cite{schrijver2003combinatorial}]
    A set function $f: 2^\V \mapsto \RR$ is
    \begin{itemize}
        \item \emph{normalized}, if $f(\emptyset) = 0$;
        \item \emph{non-decreasing}, if $f(\A) \le f(\B)$ when $\A \subseteq \B \subseteq \V$;
        \item \emph{submodular}, if $f(\A \cup \{e\}) - f(\A) \ge f(\B \cup \{e\}) - f(\B)$ when $\A \subseteq \B \subseteq \V$ and $e \in \V \setminus \B$.
    \end{itemize}
\end{definition}
From the property of submodularity, we can see that the marginal gain of adding an element $e \in \V$ to a smaller set $\A$ is larger than the marginal gain of a larger set $\B$. To simplify, this property can also be written as $f(\{e\} \given \A) \ge f(\{e\} \given \B)$. For this reason, submodular functions can be interpreted as exhibiting diminishing returns. Examples of submodular functions can be found in \cite{schrijver2003combinatorial,liu2019submodular}.

\begin{definition}[\cite{oxley2006matroid}]
    A matroid $\M = (\V, \I)$ is a set system that contains a finite ground set $\V$ and a collection $\I$ of subsets of $\V$. These subsets should satisfy the following properties:
    \begin{enumerate}
        \item $\emptyset \in \I$;
        \item If $\A \subseteq \B \in \I$, then $\A \in \I$;
        \item If $\A, \B \in \I$ and $|\B| < |\A|$, there exists a $e \in \A \setminus \B$ such that $\B \cup \{e\} \in \I$.
    \end{enumerate}
\end{definition}
A matroid $\M = (\V, \I)$ defines the subsets of the ground set $\V$ that are admissible. Thus, if a solution belongs to a matroid, it is one of the admissible sets. Also, the sets contained in $\I$ are called independent sets. A generalization of a single matroid is a matroid intersection. That is, $\I = \{\A \subseteq \V: \A \in \bigcap_{i=1}^p \I_i\}$ is the intersection of $p$ matroids, i.e., $\M_i = (\V, \I_i), \forall i=1, \ldots, p$. The cardinality of this matroid intersection is $|\I| = p$. Note that matroid intersections are not necessarily matroidal \cite{schrijver2003combinatorial}. Examples of matroids and their applications can be found in \cite{schrijver2003combinatorial,oxley2006matroid,jorgensen2017matroid,liu2019submodular,williams2017decentralized}. The reader is referred to \cite{schrijver2003combinatorial} for a comprehensive overview of matroids. A reason of interest for matroids is that it extends the concept of linear independence from linear algebra to set systems and easily represents independence constraints in combinatorial optimization. Importantly, efficient greedy algorithms can be used to optimize submodular objective functions with bounded sub-optimality when the constraint is a matroid intersection \cite{fisher1978analysis}. Thus, we will exploit these tools to solve our problem with simple greedy algorithms and guarantees on the solution quality.

\begin{definition}
    A knapsack constraint is a linear constraint defined through a cost function $c: 2^\V \mapsto \RR$ on the ground set $\V$. That is, $\X = \{ \S \subseteq \V: \sum_{e \in \S} c_e \le B \}$.
\end{definition}

\subsection{Environment Modeling and Sensing}
\label{ssec: Gaussian_process}

Generally, a Gaussian process $f(\x) \sim \mc{GP} (m(\x), \kappa (\x^i, \x^j))$ is specified by a mean function $m(\x) = \EE \lbb f(\x) \rbb$ and a covariance kernel $\kappa \lb \x^i, \x^j \rb = \EE \lbb (f(\x^i) - m(\x^i))(f(\x^j) - m(\x^j)) \rbb$, where $\x^i, \x^j \in \RR^q$. Given a training set $\E = \{ ( \x^i, z^i ) \given i = 1, \ldots, n \}$ with $n$ observations with $\x^i \in \RR^q$ as an input and $z^i \in \RR$ as an output, the properties of $f(\x)$ can be inferred through $\E$. Specifically\footnote{For notation clarity, we drop the parameters of the functions in the remainder of this section, i.e., we use $f$ to represent $f(X)$ and use $f', \mu', \Sigma', m'$ to represent $f(\x'), \mu(\x'), \Sigma(\x') , m(\x')$.}, the goal is to predict $f' \in \RR$ for the testing input $\x' \in \RR^q$ using $\E$. This process is equivalent to calculating $p(f' \given f, X, \z, \x')$, which follows a Gaussian distribution $\N \lb \mu', \Sigma' \rb$ \cite{williams2006gaussian} where $X = [(\x^1)^\trs, \ldots, (\x^n)^\trs]^\trs \in \RR^{n \times q}$ and $\z = [z^1, \ldots, z^n]^\trs \in \RR^{n}$. Also, $\mu' = m' + K_{\x' X} (K_{XX} + \sigma^2_\epsilon I )^{-1} (\z - m(X))$ and $\Sigma' = K_{\x' \x'} - K_{\x' X} (K_{XX} + \sigma^2_\epsilon I)^{-1} K_{X \x'}$, where $\sigma_\epsilon^2 \in \RR$ is the variance of an additive Gaussian noise with zero mean.
$K_{\x'X}$ is the covariance between $\x'$ and $X$ defined by $\kappa(\x', X)$ and similarly for $K_{X \x'}$, $K_{XX}$.  A commonly used kernel function is the squared exponential (SE) kernel $\kappa \lb \x^i, \x^j \rb = \sigma^2 \exp \lb -(2 \ell^2)^{-1} \lb \x^i - \x^j \rb ^2 \rb$, where $\sigma^2$ and $\ell$ are the variance and length-scale respectively.

If $\V$ is the ground set containing all available choices for collecting data for a GP, one criterion for measuring the quality of the collected information contained in dataset $\D$ is the mutual information \cite{cover2012elements} between $\D$ and $\V \setminus \D$. That is, $ M(\D) = H(\D) - H(\D \given \V \setminus \D)$,
where $H(\D) = \frac{1}{2} \log \det \lb 2 \pi e \Sigma(\D) \rb$ is the entropy and $\Sigma(\D)$ is the covariance of $\D$. Mutual information is symmetric, i.e., $M(\D) = M(\V \setminus \D)$. Using the relationship between mutual information and entropy, we get the marginal gain \cite{krause2008near} of element $e$ under current set $\D$ as $M(\{e\} \given \D) = M(\{e\} \cup \D) - M(\D)
    = H(\{e\} \given \D) - H(\{e\} \given \V \setminus (\D \cup \{e\}))$.

In this work, we will exploit the metric of mutual information on information collected over space and in time during intermittent deployments.  The addition of the time dimension leaves all of the typical properties and evaluation methodologies of mutual information intact, such as in \cite{krause2008near}. Thus we omit those details here as it is not our novelty.

\section{The Intermittent Deployment Problem}
\label{sec: problem_formulation}

\subsection{General Modeling}
\label{ssec: problem_formulation}

Consider a spatiotemporal Gaussian process $f(\x) \sim \mc{GP} ( m(\x), \kappa ( \x^i, \x^j))$ evolving over a $P \times Q$ 2D field $\P$ with time indexed by $t$ where $\x^i = [x^i, y^i, t]^\trs \in \RR^3$. $\x^i$ is composed of a 2D location $\p^i = (x^i, y^i)$ and the corresponding time $t$. Also, $z^i(\x^i) \in \RR$ is the corresponding measurement. $\P = \{1, \ldots, P \times Q\}$ contains the indexes of the available locations for sensing the process. The initial training data set is $\E = \{ \lb \x^{i}, z^{i} \rb \given i = 1, \ldots, n \}$ with $n$ observations, where $X = [(\x^{1})^\trs, \ldots, (\x^{n})^\trs]^\trs \in \RR^{n \times 3}$ is the input and $\z = [z^{1}, \ldots, z^{n}]^\trs \in \RR^{n}$ is the output. Since $\E$ is only used for training an initial environment model, it can derive either from existing process data (e.g., from similar pastureland deployments) or from manual short-term deployments in the target environment before intermittent deployment begins\footnote{It is worth noting that our formulation also applies if the initial process model is poor (high uncertainty).  In this case, initial deployment decisions may be poor.  However as data is collected and the model improves, deployment decisions will correspondingly improve.}. For the kernel function, we use a separable spatiotemporal kernel to build a composite covariance function as $\kappa \lb \x^i, \x^j \rb = \kappa_p \lb \p^i, \p^j \rb \cdot \kappa_t \lb t^i, t^j \rb + \sigma_\epsilon^2$,
where $\kappa_p$ and $\kappa_t$ are the kernels for the $\p$ and $t$ dimensions, respectively.

Let us now build the ground set $\V$, which contains all possible deployment decisions for a heterogeneous multi-robot team over space and in time.
We denote by $\T = \{1, \ldots, T\}$ the set of time indices when robots can be deployed. Consider a set of heterogeneous robots $\R$ which are available to deploy over the time horizon $\mathcal{T}$. We denote by $\R = \{1, \ldots, R\}$ the set of indices of all robots. Each robot $r \in \R$ has a different sensing capability, i.e., different sensing noise variance $\sigma_r^2$, and different deployment cost $c_r^i$
with respect to different location index $i \in \P$. Every location index $i$ is associated with a location $(x^i, y^i)$.
Also, when considering the heterogeneity of robot $r$ at different time $t$, we can write the above sensing noise variance and cost as $\sigma_{r}^2(t)$ and $c_r^i(t)$.
Now, our ground set per time contains all available choices at time $t$ in the Cartesian product of $\P$ and $\R$ denoted by $\V_t = \{(x_r^i, y_r^i, t): \forall i \in \P, r \in \R\}$.
We interpret $(x_r^i, y_r^i, t)$ as ``location $(x^i, y^i)$ is sensed by robot $r$ at time $t$''. Finally, the ground set of the intermittent deployment problem is $\V = \bigcup_{t \in \T} \V_t$. Importantly, the $\V_i$'s are disjoint, i.e., $\V_i \cap \V_j = \emptyset, \forall i, j \in \T$. The cardinality of the ground set is $|\V| = P \cdot Q \cdot R \cdot T$. In \tabref{tab: ground_set}, we show all elements in the ground set $\V$, where we use $N = P \cdot Q$ for brevity.

\begin{table}[!tbp]
    \centering
    \begin{threeparttable}
        \caption{The ground set of the intermittent deployment problem}
        \label{tab: ground_set}
        \begin{tabular}{cccc}
            \toprule
                                         & $t=1$               & $\ldots$ & $t=T$               \\ \midrule
            \multirow{3}{*}{$r = \{1\}$} & $(x_1^1, y_1^1, 1)$ & $\ldots$ & $(x_1^1, y_1^1, T)$ \\
                                         & $\ldots$            & $\ldots$ & $\ldots$            \\
                                         & $(x_1^N, y_1^N, 1)$ & $\ldots$ & $(x_1^N, y_1^N, T)$ \\ \hline
            \multirow{3}{*}{$r = \{r\}$} & $(x_r^1, y_r^1, 1)$ & $\ldots$ & $(x_r^1, y_r^1, T)$ \\
                                         & $\ldots$            & $\ldots$ & $\ldots$            \\
                                         & $(x_r^N, y_r^N, 1)$ & $\ldots$ & $(x_r^N, y_r^N, T)$ \\ \hline
            \multirow{3}{*}{$r = \{R\}$} & $(x_R^1, y_R^1, 1)$ & $\ldots$ & $(x_R^1, y_R^1, T)$ \\
                                         & $\ldots$            & $\ldots$ & $\ldots$            \\
                                         & $(x_R^N, y_R^N, 1)$ & $\ldots$ & $(x_R^N, y_R^N, T)$ \\ \bottomrule
        \end{tabular}
    \end{threeparttable}
\end{table}

\subsection{Constraint Modeling}
\label{ssec: constraint_modeling}

\noindent
1) \emph{Knapsack constraints}.

The general form is $\X = \{\S \subseteq \V: c(S) \le B\}$, where $c: 2^\V \mapsto \RR$ is a cost function.
This form can be used in the following ways:
\begin{itemize}
    \item $\X_1 = \{\S \subseteq \V: c(\S_r) \le B, \forall r \in  \R \}$,
          where $S_r = \{(x_k^i, y_k^i, t) \given k = r\}$.
          In this case, we set a constraint $B$ on the cost used by each robot.

    \item $\X_2 = \{\S \subseteq \V:  c(\S(t)) \le B, \forall t \in \T \}$,
          where $S(t) = \{(x_r^i, y_r^i, k) \given k = t\}$.
          In this case, we set a constraint $B$ on the cost in each time.

    \item $\X_3 = \{\S \subseteq \V:  c(\S^i) \le B, \forall i \in \P \}$,
          where $S^i = \{(x_r^k, y_r^k, t) \given k = i\}$.
          Here, we set a constraint $B$
          on the cost used by all robots for sensing the location $(x^i, y^i)$.
\end{itemize}

\noindent
2) \emph{Homogeneous constraints}.

The general form is a matroid $\M = (\V, \I)$ with
\begin{equation*}
    \I = \lbbb \S \subseteq \V: \sum_{t \in \T} \one( |\S \cap \V_t| ) \le L \rbbb,
\end{equation*}
where $\one( |\S \cap \V_t| ) = 1$ if $ |\S \cap \V_t| \ge 1 $ and $\one( |\S \cap \V_t| ) = 0$ if $ |\S \cap \V_t| = 0$.
We refer to these constraints as homogeneous as we do not differentiate the behavior of different robots under these constraints. In this case, the general form above can be used in the following ways:
\begin{itemize}
    \item $\I_{21} = \{ \S \subseteq \V: |\S \cap \V_t| \le L(t)\}$. \\
          In this case, we can model the constraint on the number of deployed robots in each time $t \in \T$.

    \item $\I_{22} = \{ \S \subseteq \V: \one( |\S \cap \V_t| ) \le L(t) \}$. \\
          In this case, we can model the constraint on the deployment status in each time $t \in \T$. We use the deployment status to indicate if there is at least one deployment at time $t$. $L(t) \in \{0, 1\}$, where $L(t)=0$ means there is no deployment at $t$ and $L(t) = 1$ means there is at least one deployment at $t$. In this way, the constraint can be used to control intermittence by restricting times when there can be no deployment.

    \item $\I_{23} = \{\S \subseteq \V: \sum_{t \in \T} \one( |\S \cap \V_t| ) \le L \}$. \\
          This is the most general form in this case, as shown in the beginning. We can use this to model the constraint on the number of non-zero deployments from $t=1$ to $t=T$. If there is at least one deployment at some time, we call it a non-zero deployment. This constraint is also an intermittence constraint.
\end{itemize}

\noindent
3) \emph{Heterogeneous constraints}.

The general form is a matroid $\M = (\V, \I)$ with
\begin{equation*}
    \I = \lbbb \S \subseteq \V: \sum_{t \in \T} \one( |\S_r \cap \V_t| ) \le L_r, \forall r \in \R \rbbb,
\end{equation*}
where $S_r = \{(x_k^i, y_k^i, t)\;|\; k = r\}$. Also, $\one( |\S_r \cap \V_t| ) = 1$ if $ |\S_r \cap \V_t| \ge 1 $, and $\one( |\S_r \cap \V_t| ) = 0$ if $ |\S_r \cap \V_t| = 0$.
We refer to these constraints as heterogeneous as different robots will have different constraints that are unique. In this case, this general form can be used in the following ways:
\begin{itemize}
    \item $\I_{31} = \{ \S \subseteq \V: |\S_r \cap \V_t| \le L_r(t), \forall r \in \R\}$. \\
          In this case, we can model the constraint on the number of deployments for robot $r$ in each time $t$.

    \item $\I_{32} = \{ \S \subseteq \V: \one( |\S_r \cap \V_t| ) \le L_r(t), \forall r \in \R \}$. \\
          In this case, we can set a deployment status for robot $r$ in each time $t$. As before,  $L_{r}(t)=0$ means there is no deployment for robot $r$ at $t$ and $L_r(t) = 1$ means there is at least one deployment for robot $r$ at time $t$. Therefore, this constraint is especially useful for modeling any available/unavailable time window \cite{gini2017multi} for each robot $r \in \R$. Specifically, if the available deployment time for robot $r$ is time $t=1$ and $t=2$, we can set $L_{r}(1) = 1$, $L_{r}(2) = 1$, and $L_{r}(t) = 0, \forall t \in \T \setminus \{1, 2\}$. Clearly, this constraint can be classified as an intermittence constraint.

    \item $\I_{33} = \{\S \subseteq \V: \sum_{t \in \T} \one( |\S_r \cap \V_t| ) \le L_r, \forall r \in \R\}$. \\
          This is the most general form in this case as shown in the beginning. We can use this to model the constraint on the number of \emph{non-zero} deployments for robot $r$ from $t=1$ to $t=T$. This is also an intermittence constraint.
\end{itemize}

\begin{figure*}[!t]
    \centering
    \subfigure[Time $t = 1$.]{
        \label{fig: gp_1}
        \includegraphics[width = .31\linewidth]{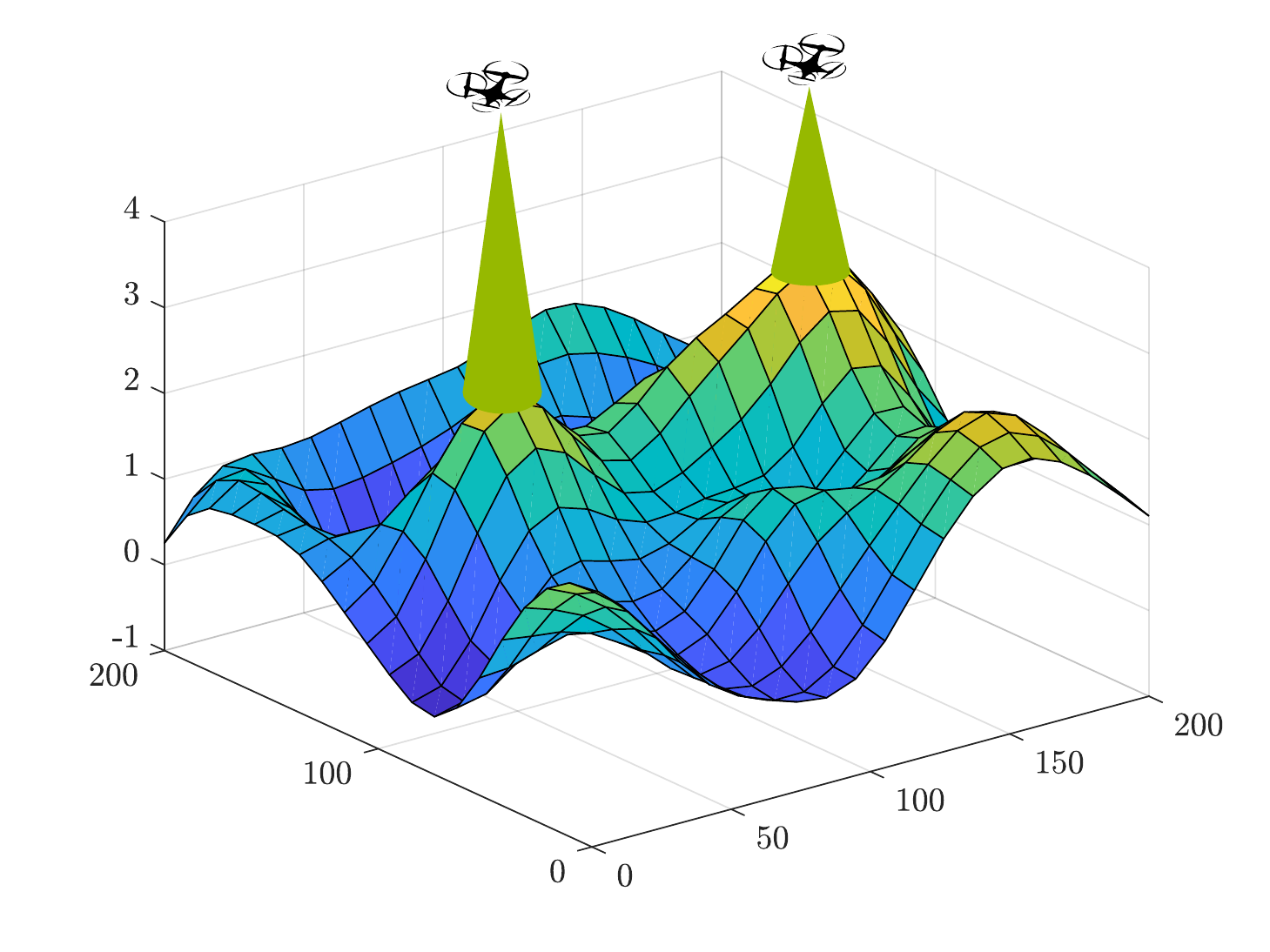}}
    \subfigure[Time $t = 2$.]{
        \label{fig: gp_2}
        \includegraphics[width = .31\linewidth]{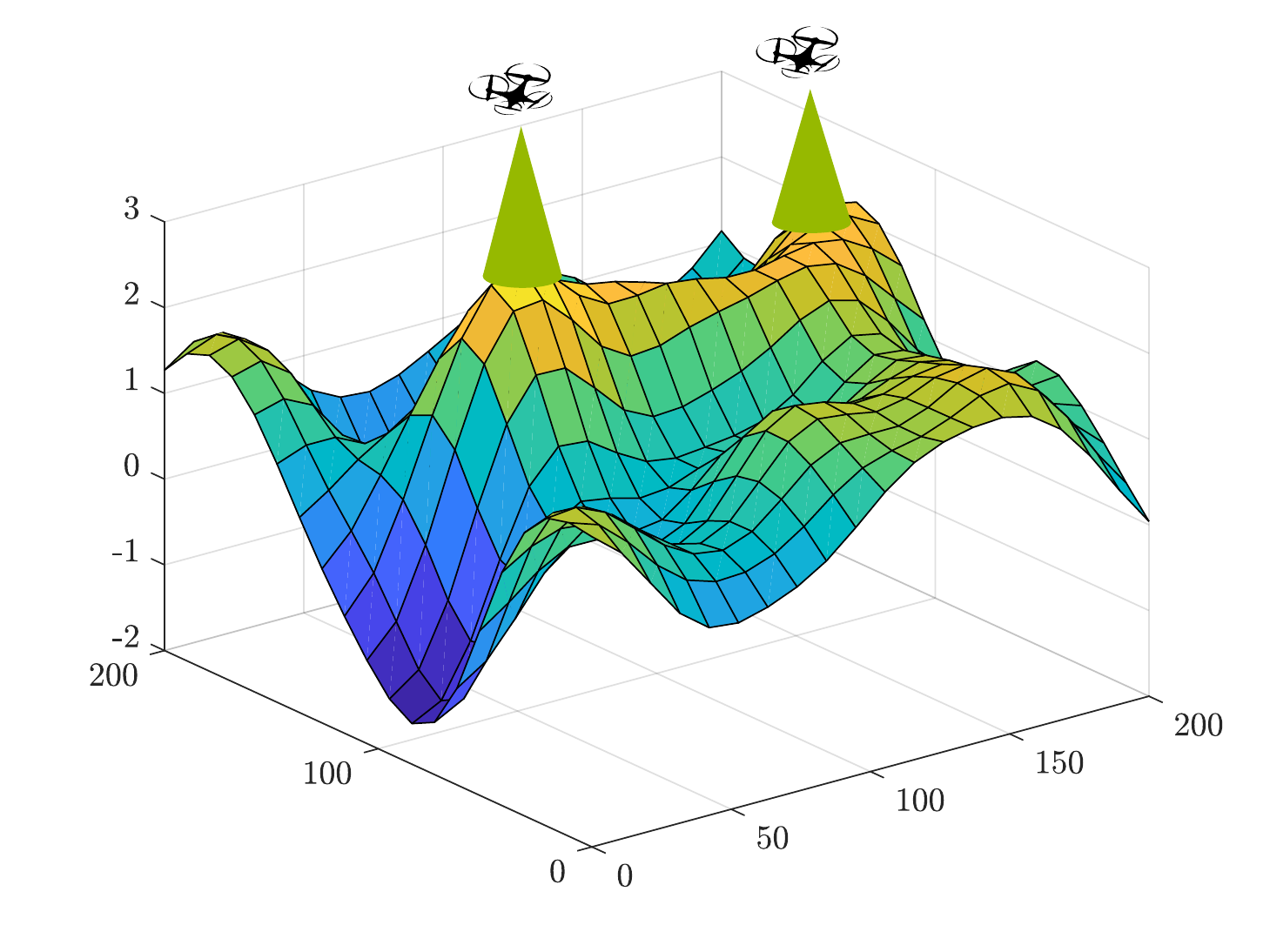}}
    \subfigure[Time $t = 5$.]{
        \label{fig: gp_5}
        \includegraphics[width = .31\linewidth]{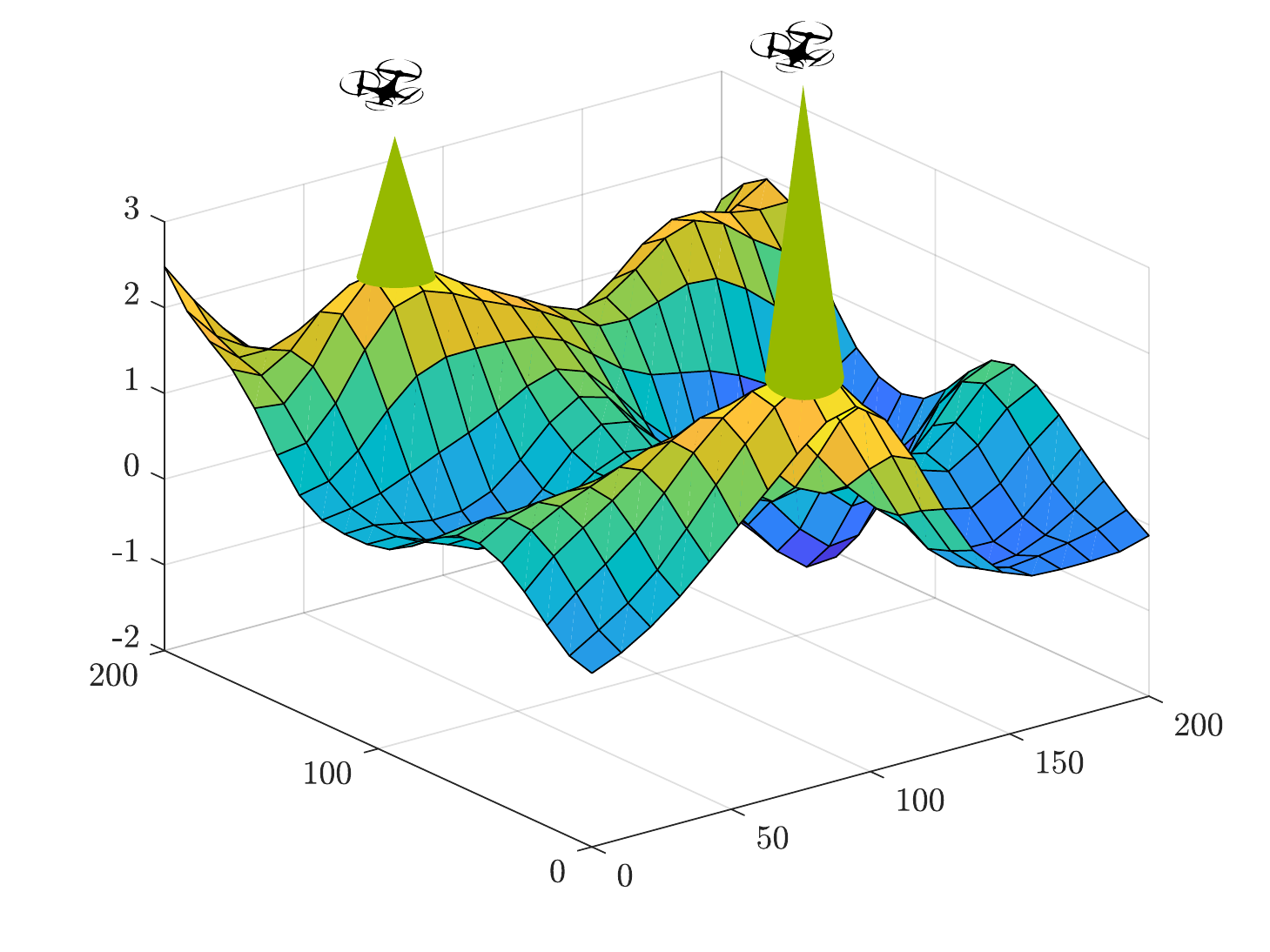}}
    \caption{An example of intermittent deployment with 2 robots and time horizon 5 with a solution as $\D = \{(x_r^i, y_r^i, t): r \in \{1, 2\}, i \in \P, t \in \{1, 2, 5\}\}$. An instance of this solution can be as follows. (a). At $t=1$, there are two deployments: $(x_1^{85}, y_1^{85}, 1)$ and $(x_2^{190}, y_2^{190}, 1)$. Also, the $85$th location corresponds to the actual 2D location $(95, 175)$ and the $190$th location corresponds to the actual 2D location $(190, 175)$ (b). At $t=2$, there are two deployments: $(x_1^{110}, y_1^{110}, 2)$ and $(x_2^{180}, y_2^{180}, 2)$. (c). At $t=5$, there are two deployments: $(x_1^{80}, y_1^{80}, 5)$ and $(x_2^{150}, y_2^{150}, 5)$.}
    \label{fig: intermittent_sensing_example}
\end{figure*}

An illustrative example of the intermittent deployment problem
is given in \figref{fig: intermittent_sensing_example}.

In the intermittent deployment problem, we desire high-quality predictions of our process, and thus we choose to maximize the mutual information we collect over deployments. We write the matroid intersection constraint as $\I = \{\S \subseteq \V: \S \in \bigcap_{k \in \K} \I_k\}$ where $\K$ represents the set indexing the matroids constraints. Also, we denote by $\X = \{\S \subseteq \V: S \in \bigcap_{\ell \in \L} \X_\ell\}$ and $\L$ represents the set indexing the knapsack constraints. Then, the general form of the intermittent deployment problem is now given as:
\begin{equation}
    \begin{split}
        \underset{\S \subseteq \V}{\text{maximize}} \quad & M(\S) \\
        \text{subject to} \quad & \S \in \X, \S \in \I,
    \end{split}
    \label{eq: formulation}
\end{equation}

\begin{remark}
    We have given general forms of various constraints in order to begin a library of models for the intermittent deployment problem that can meet various application requirements. In the simulation section, we will give an example to demonstrate this idea.
\end{remark}

\section{Solution Method and Analyses}
\label{sec: intermittent deployment}

\begin{theorem}
    Both the homogeneous constraints and heterogeneous constraints are matroidal.
\end{theorem}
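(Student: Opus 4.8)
The plan is to verify the three defining matroid axioms directly for each constraint, exploiting the partition structure that the ground set carries: for the homogeneous constraints $\V$ is partitioned into the disjoint time blocks $\V_1, \ldots, \V_T$, while for the heterogeneous constraints this refines into the robot--time blocks $\V_{r,t} = \{(x_r^i, y_r^i, t) : i \in \P\}$. A useful first reduction is that every heterogeneous constraint couples elements only within a single robot, so the splitting $\V = \bigcup_{r \in \R} \V^{(r)}$ (with $\V^{(r)}$ the elements belonging to robot $r$) expresses the system as a direct sum over robots; since a direct sum of matroids is again a matroid, it suffices to treat the per-robot system, which has exactly the form of a homogeneous constraint on $\V^{(r)}$. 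Hence I would settle the homogeneous case first and obtain the heterogeneous case as a corollary.

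The emptiness and downward-closure axioms are immediate in every case: $\emptyset$ touches no block, and deleting elements can only decrease each count $|\S \cap \V_t|$, and hence each indicator $\one(|\S \cap \V_t|)$, so membership is preserved under taking subsets. The content therefore sits entirely in the augmentation (exchange) axiom. For the per-time cardinality form $\I_{21}$ and its robot-specific analogue $\I_{31}$, the admissible sets are precisely the independent sets of a partition matroid with capacities $L(t)$ (respectively $L_r(t)$) on the blocks, so exchange follows from the standard pigeonhole argument: if $\A, \B \in \I$ with $|\A| > |\B|$, then $\A$ must exceed $\B$ in some block that $\B$ has not saturated, and any element of that block in $\A \setminus \B$ augments $\B$. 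The deployment-status forms $\I_{22}, \I_{32}$ are the special case in which each capacity is $0$ or $\infty$ (a restriction of the free matroid to the admissible blocks), so the same reasoning applies.

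The hard part, and where I expect the real difficulty to lie, is the most general form: the bound $\sum_{t \in \T} \one(|\S \cap \V_t|) \le L$ on the number of \emph{nonempty} blocks ($\I_{23}$, and its per-robot version $\I_{33}$). Here the rank is no longer additive in the per-block cardinalities, and the exchange axiom is delicate: a larger independent set concentrated inside a single already-counted block offers no element capable of augmenting a smaller independent set whose budget $L$ is already saturated across distinct blocks. Pushing exchange through therefore seems to require the additional structural fact that each selected block contributes effectively one element, i.e.\ a per-block cardinality-one condition $|\S \cap \V_t| \le 1$ (respectively $|\S_r \cap \V_t| \le 1$) is in force, so that $\sum_{t \in \T} \one(|\S \cap \V_t|)$ coincides with $|\S|$ and the constraint collapses to a uniform (truncation) matroid, for which exchange is classical. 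I would isolate and justify this cardinality-one reduction as the key lemma; pinning down the precise hypothesis under which the count-of-nonempty-blocks constraint is matroidal is the main obstacle, since without it the exchange property can genuinely fail.
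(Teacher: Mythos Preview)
The paper does not actually supply a proof of this theorem: it reads ``The proof is omitted here due to space limitations. The reader is referred to our previous work \ldots\ for applicable proof methodologies.'' So there is no argument to compare your approach against, only a pointer to external references.

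On the mathematics of your proposal: your treatment of $\I_{21}, \I_{22}, \I_{31}, \I_{32}$ as partition matroids (with the heterogeneous cases obtained as direct sums over robots) is correct and is the standard route. Your diagnosis of $\I_{23}$ and $\I_{33}$ is also correct, and in fact sharper than what the paper asserts. The family
\[
\I_{23} = \Bigl\{\S \subseteq \V : \sum_{t \in \T} \one(|\S \cap \V_t|) \le L \Bigr\}
\]
is \emph{not} a matroid in general. A two-block counterexample suffices: take $\V_1 = \{a,b\}$, $\V_2 = \{c\}$, $L = 1$; then $\A = \{a,b\}$ and $\B = \{c\}$ are both independent with $|\A| > |\B|$, yet neither $\{c,a\}$ nor $\{c,b\}$ lies in $\I_{23}$, so exchange fails. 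Your instinct that an auxiliary per-block cardinality bound $|\S \cap \V_t| \le 1$ is needed is exactly right: with it, $\I_{23}$ becomes the rank-$L$ truncation of a partition matroid and is genuinely matroidal; without it, the theorem as stated does not hold for the general forms $\I_{23}$ and $\I_{33}$. Since the paper omits the proof, this issue is simply not visible there; your proposal locates the precise obstruction.
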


The proof is omitted here due to space limitations.  The reader is referred to our previous work \cite{williams2017decentralized,liu2019submodular} for applicable proof methodologies.

\begin{theorem}
    Consider the intermittent deployment problem given by \eqref{eq: formulation}. If the 2D space is discretized into $P \times Q$ cells, then the cardinality of the solution $|\D|$ should satisfy $|\D| \le \frac{P \cdot Q \cdot R \cdot T}{2}$ to ensure the non-decreasing property of $M(\D)$. If the maximum cardinality of the solution $\D$ is $|\D|$, then the discretization should satisfy $P \cdot Q \ge \frac{2 |\D|}{R \cdot T}$ to ensure the non-decreasing property of $M(\D)$.
\end{theorem}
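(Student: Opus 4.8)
The plan is to reduce the non-decreasing property to a sign condition on the marginal gain of $M$, and then convert that sign condition into the stated cardinality bound. First I would invoke the definition of \emph{non-decreasing}: the set function $M$ is non-decreasing exactly when every marginal gain is non-negative, so it suffices to establish $M(\{e\} \given \D) \ge 0$ for every admissible $\D \subseteq \V$ and every $e \in \V \setminus \D$. Using the marginal-gain expression derived in \secref{ssec: Gaussian_process}, I would write $M(\{e\} \given \D) = H(\{e\} \given \D) - H(\{e\} \given \V \setminus (\D \cup \{e\}))$, so that the non-decreasing property is equivalent to the pointwise inequality $H(\{e\} \given \D) \ge H(\{e\} \given \V \setminus (\D \cup \{e\}))$.

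Next I would exploit the Gaussian structure of the process. For a jointly Gaussian field the conditional entropy of a single location is $H(\{e\} \given \S) = \tfrac{1}{2}\log(2\pi e\, \sigma^2_{e \given \S})$, a strictly increasing function of the posterior variance $\sigma^2_{e \given \S}$. Hence the sign of the marginal gain coincides with the sign of $\sigma^2_{e \given \D} - \sigma^2_{e \given \V \setminus (\D \cup \{e\})}$, and the desired inequality becomes $\sigma^2_{e \given \D} \ge \sigma^2_{e \given \V \setminus (\D \cup \{e\})}$. The governing monotonicity fact is the ``information never hurts'' principle: the posterior variance of $e$ can only decrease as points are added to the conditioning set, so a larger, denser conditioning set yields a smaller conditional variance. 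Under a fine discretization of $\P$ relative to the kernel length-scale $\ell$, this effect is controlled by the \emph{number} of conditioning points, so the inequality holds whenever the selected set $\D$ is no more informative than its complementary unselected set, i.e. whenever $|\D| \le |\V \setminus \D|$.

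From here the two assertions follow by direct algebra. The balance $|\D| \le |\V \setminus \D| = |\V| - |\D|$ gives $2|\D| \le |\V| = P \cdot Q \cdot R \cdot T$, hence $|\D| \le \frac{P \cdot Q \cdot R \cdot T}{2}$, which is the first claim. Holding $|\D|$ fixed and solving the same inequality for the spatial resolution yields $P \cdot Q \ge \frac{2|\D|}{R \cdot T}$, which is the second claim; both statements are thus the same condition read in two directions.

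I expect the main obstacle to be the rigorous justification of the step that converts ``larger conditioning set'' into ``more conditioning points,'' since in general $H(\{e\} \given \S)$ depends on \emph{which} locations lie in $\S$ and not merely on $|\S|$. This is precisely where the fine-discretization hypothesis of \cite{krause2008near} enters: when the grid is dense, every added location is informative and the variance comparison is dominated by cardinality, so the crossover from positive to negative marginal gains occurs at the symmetric level $|\D| = |\V|/2$, consistent with the symmetry $M(\D) = M(\V \setminus \D)$ noted earlier. A fully careful treatment would also track the single transferred element $e$, whose exclusion shifts the exact threshold by one element (a lower-order term); I would absorb this correction into the stated half-cardinality bound rather than sharpen it.
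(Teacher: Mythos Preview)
Your proposal is correct and follows essentially the same approach as the paper: both reduce the claim to the fact, taken from \cite{krause2008near}, that mutual information is non-decreasing for $|\D| \le |\V|/2$, then substitute $|\V| = P \cdot Q \cdot R \cdot T$ and rearrange. The only difference is granularity: the paper invokes that monotonicity result as a black box, whereas you partially unpack it via the marginal-gain identity $M(\{e\} \given \D) = H(\{e\} \given \D) - H(\{e\} \given \V \setminus (\D \cup \{e\}))$ and the posterior-variance comparison, correctly flagging that the passage from ``larger conditioning set'' to ``more conditioning points'' is exactly the fine-discretization hypothesis supplied by \cite{krause2008near}.
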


\begin{proof}
    When the ground set is $\V$, we have from \cite{krause2008near} that $M(\D)$ is monotonic non-decreasing when $|\D| \in (0, \frac{|\V|}{2}]$ and monotonic non-increasing when $|\D| \in [\frac{|\V|}{2}, |\V|]$. As the 2D space is discretized into $P \times Q$, the cardinality of the ground set $\V$ is $P \cdot Q \cdot R \cdot T$. Therefore, if $P \times Q$ is fixed, then $|\D| \le \frac{P \cdot Q \cdot R \cdot T}{2}$ can ensure the non-decreasing property of $M(\D)$. On the contrary, if $|\D|$ is fixed, i.e., the total number of deployments is fixed, a proper discretization of the 2D space, i.e., $P \cdot Q \ge \frac{2 |\D|}{R \cdot T}$, ensures the non-decreasing property.
\end{proof}

We here provide a general greedy method from \cite{badanidiyuru2014fast} to demonstrate how to solve the problem when considering $p$ matroid constraints and $\ell$ knapsack constraints.  We contextualize the algorithm to our problem for completeness and also tutorial value.

\begin{algorithm}[!t]
    \caption{The Intermittent Deployment Problem}
    \label{alg: 2}
    \textbf{Input:} The inputs are as follows:
    \begin{itemize}
        \item matroid intersection constraint $\bigcap_{i=1}^p \I_i$, knapsack constraint $\bigcap_{j=1}^\ell \X_j$; mutual information function $M(\cdot)$
    \end{itemize}

    \textbf{Output:} The deployment set $\D$.

    \begin{algorithmic}[1]
        \Statex
        \State $\T \leftarrow \emptyset$;
        \State $d \leftarrow \max_{e \in \V} M(\{e\}) $;
        \For{($\rho = \frac{d}{p+\ell}; \rho \le \frac{2d}{p+\ell} |\V|; \rho \leftarrow (1+\eta) \rho$)}
        \State $\S \leftarrow \emptyset$;
        \State $M_\rho \leftarrow \max_{e \in \V} \{M(\{e\}): \frac{M(\{e\})}{\sum_{j=1}^\ell c_{j}(e)} \ge \rho \}$;
        \State
        \For{($\tau = M_\rho; \tau \ge \frac{\eta}{|\V|} M_\rho; \tau \leftarrow \frac{\tau}{1 + \eta}$)}
        \For{$\forall e \in \V$}
        \If{$\S \cup \{e\} \in \I$}
        \State $m_e \leftarrow M(\{e\} \given \S)$;
        \If{$m_e \ge \tau$ and $\frac{m_e}{\sum_{j=1}^\ell c_{j}(e)} \ge \rho$}
        \State
        \If{$c_j(\S \cup \{e\}) \le B, j=1, \ldots, \ell$}
        \State $\S \leftarrow \S \cup \{e\}$;
        \Else
        \State $\T \leftarrow \T \cup \{e\}$;
        \State $\T \leftarrow \T \cup \{\S\}$;
        \State \textbf{restart} with the next $\rho$;
        \EndIf
        \State
        \EndIf
        \EndIf
        \EndFor
        \EndFor
        \State $\T \leftarrow \T \cup \{{\S}\}$;
        \EndFor
        \State $\D \leftarrow \argmax_{\S \in \T} M(\S)$.
    \end{algorithmic}
\end{algorithm}

The \algref{alg: 2} has two loops: the outer loop and the inner loop to deal with knapsack constraints and matroid constraints, respectively. In each iteration of the outer loop (Line 3-26), the algorithm picks a threshold $\rho$ as a marginal-to-cost ratio, i.e., $\frac{m_e}{\sum_{j=1}^\ell c_{j}(e)}, \forall e \in \V$. Again, $m_e = M(\{e\} \given \S)$ is the marginal value of $e$ under the current solution $\S$, and $e = (x_{r}^i, y_{r}^i, t)$ is one element of $\V$. Also, $c_{j}(e)$ is the cost of $e \in \V$ in $j$th knapsack constraint. The $\S$ chosen by the outer loop will serve as one candidate for the final solution.
The threshold $\rho$ ensures we do not exceed the budget constraint by discretizing the space of the marginal-to-cost ratio. The inner loop (Line 7-24) also has a threshold $\tau$ for the marginal value $m_e$ and decreases in every iteration. This threshold $\tau$ acts as a lower bound for the marginal value of $m_e$ in the current settings. Under different $\tau$'s, we can add different $e$'s to the current solution $\S$ when constraints are satisfied (Line 14). The final $\S$ forms one candidate for the problem solution. There might be a case that only the knapsack constraints are not satisfied. We then need to keep track of this case (Line 16-17) when deciding the final problem solution $\D$.

\begin{theorem}[\cite{badanidiyuru2014fast}]
    Let $\D$ be the greedy solution of the intermittent deployment. Also, denote by $\D^\star$ an optimal solution of \algref{alg: 2}, then we have an optimality bound $M(\D) \ge \frac{1}{(1+\eta)(p + 2 \ell + 1)} M(\D^\star)$ with $\eta \in (0, 1]$ .
    \label{thm: optimality_ratio}
\end{theorem}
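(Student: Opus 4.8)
The plan is to adapt the threshold-greedy analysis underlying \cite{badanidiyuru2014fast} to our combined matroid-intersection-plus-knapsack setting, exploiting the submodularity, normalization, and monotonicity of $M$: monotonicity holds here because Theorem~2 guarantees the non-decreasing regime on the solution sizes the algorithm actually produces, and the algorithm only ever grows $\S$. First I would isolate the role of the two discretization loops. The outer loop geometrically sweeps the marginal-to-cost threshold $\rho$ and the inner loop geometrically sweeps the marginal-value threshold $\tau$, each by a factor $(1+\eta)$. Since the true optimal density and the true optimal marginal values each lie within one geometric step of some tested threshold, for the ``right'' value of $\rho$ the candidate set $\S$ that the inner loop produces behaves, up to a multiplicative loss of $(1+\eta)$, like the solution of an idealized greedy that always adds the feasible element of maximum marginal-to-cost ratio. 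This supplies the $(1+\eta)$ factor in the denominator and reduces the problem to analyzing the idealized density-greedy against $\D^\star$.

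Next I would handle the combinatorial ratio $p + 2\ell + 1$ by separating the matroid and knapsack contributions. For the $p$ matroid constraints I would run the standard exchange argument: using the augmentation axiom (property~3 of the matroid definition), I construct, for each matroid $\I_i$, a charging map from the optimal elements $\D^\star \setminus \D$ to greedy elements whose marginal gains dominate them, where domination follows from submodularity of $M$ together with the greedy selection rule (any optimal element still feasible at a given step had marginal value below the accepted threshold). Summing over the $p$ matroids and over greedy steps yields the pure-matroid inequality $M(\D^\star) \le (p+1)\,M(\D)$, which the knapsacks then inflate.

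For the $\ell$ knapsack constraints I would analyze the density threshold $\rho$ together with the early-termination branch (Lines~11--17), where the algorithm records the current set $\S$ and the rejected singleton $\{e\}$ in the candidate pool $\T$ whenever adding $e$ would violate some budget $B$. Two regimes arise: either the greedy terminates with all budgets slack, so the matroid argument above already applies; or some knapsack becomes tight, in which case the accumulated elements have total cost essentially equal to $B$ and, by the density threshold, total value at least $\rho B$, which I compare against the optimal value that knapsack can absorb. Because the final rejected element may itself carry large value, the last line returns $\argmax_{\S \in \T} M(\S)$ over the recorded candidates (including the singleton $\{e\}$), so each knapsack contributes a factor of $2$ rather than $1$, giving the $2\ell$ term. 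Combining the matroid term $p$, the knapsack term $2\ell$, and the base $+1$, then reinstating the $(1+\eta)$ discretization loss, yields the stated ratio.

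The hard part will be making the two arguments compatible simultaneously rather than in isolation: the matroid exchange argument charges optimal elements to greedy elements under a fixed comparison with $\D^\star$, but the density filtering for the knapsacks may discard elements that the exchange scheme wants to charge to, so the charging must respect both independence and budget feasibility at once. I would resolve this by charging each optimal element only to greedy elements selected while both the relevant $\rho$ and $\tau$ thresholds were still above that optimal element's density and marginal value, so every charged gain dominates the charged optimal contribution up to the $(1+\eta)$ slack. The most delicate bookkeeping is verifying that, in the tight-knapsack regime, the best recorded candidate in $\T$ dominates the value lost to the rejected element; this is precisely what the factor $2$ per knapsack encodes.
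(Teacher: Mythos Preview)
The paper does not supply its own proof of this theorem: the statement is quoted directly from \cite{badanidiyuru2014fast} (note the citation in the theorem header) and is followed immediately by the next theorem with no intervening proof environment. So there is nothing in the paper to compare your proposal against.

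Your sketch is a reasonable high-level reconstruction of the Badanidiyuru--Vondr\'ak argument---the $(1+\eta)$ loss from the geometric threshold sweep, the $p$-matroid exchange charging, and the factor~$2$ per knapsack coming from the early-termination branch that records both the partial solution and the overflowing singleton---but be aware that in the original analysis the knapsack and matroid pieces are not handled by two separate chargings glued together afterward. Rather, a single density threshold $\rho$ is fixed, and the resulting thresholded greedy is compared to $\D^\star$ in one inequality that already carries the full $p+2\ell+1$; the ``hard part'' you anticipate (making the matroid exchange respect the density filter) is exactly where the published proof does its work, and your outline does not yet contain the key step that resolves it. If you intend to flesh this out, that coupling is the place to focus.
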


\begin{theorem}
    The computational complexity of \algref{alg: 2} for solving the intermittent deployment problem is $\O(\frac{|\V|^4}{\eta^2} \log^2 \frac{|\V|}{\eta})$.
\end{theorem}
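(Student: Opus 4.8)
The plan is to bound the total running time by multiplying the number of iterations of each nested loop in \algref{alg: 2} by the cost of the work performed in the innermost body, which is dominated by oracle calls to the mutual information function $M(\cdot)$. I would proceed structurally, following the control flow of the algorithm.

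First I would count the outer loop over $\rho$ (Lines 3--26). The threshold $\rho$ is initialized at $\frac{d}{p+\ell}$ and multiplied by $(1+\eta)$ each pass until it exceeds $\frac{2d}{p+\ell}|\V|$, so the geometric ratio traversed is $2|\V|$ and the number of passes is $\log_{1+\eta}(2|\V|)$. Using $\ln(1+\eta) \ge \eta/2$ for $\eta \in (0,1]$, this is $\O(\eta^{-1}\log|\V|)$. Next, the inner loop over $\tau$ (Lines 7--24) starts at $M_\rho$ and is divided by $(1+\eta)$ until it drops below $\frac{\eta}{|\V|}M_\rho$, so the ratio traversed is $\frac{|\V|}{\eta}$ and the number of passes is $\log_{1+\eta}\frac{|\V|}{\eta} = \O(\eta^{-1}\log\frac{|\V|}{\eta})$. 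The innermost sweep over all $e \in \V$ (Lines 8--23) contributes a further factor of $|\V|$.

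Then I would bound the cost of a single body execution. Each pass performs a matroid-intersection independence test and a knapsack-feasibility check, both of which are low-order polynomial for the counting-type matroids and linear cost functions used here, together with one evaluation of the marginal gain $M(\{e\} \given \S)$. The latter dominates: evaluating mutual information requires forming conditional entropies of the underlying Gaussian process, i.e., log-determinants of covariance submatrices whose dimension is $\O(|\V|)$, which costs $\O(|\V|^3)$ via Cholesky factorization. I would also note that the computation of $d$ (Line 2) and of $M_\rho$ (Line 5) each use only $\O(|\V|)$ oracle calls and are absorbed into the surrounding loop counts.

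Multiplying the outer count $\O(\eta^{-1}\log|\V|)$, the inner count $\O(\eta^{-1}\log\frac{|\V|}{\eta})$, the sweep $\O(|\V|)$, and the per-call oracle cost $\O(|\V|^3)$ yields $\O(\frac{|\V|^4}{\eta^2}\log|\V|\log\frac{|\V|}{\eta})$. Since $\eta \le 1$ implies $\log|\V| \le \log\frac{|\V|}{\eta}$, this simplifies to the claimed $\O(\frac{|\V|^4}{\eta^2}\log^2\frac{|\V|}{\eta})$. The main obstacle is justifying the $\O(|\V|^3)$ per-oracle cost rigorously: one must argue that the relevant covariance matrices can grow to dimension proportional to $|\V|$ and that no incremental or low-rank update reduces the log-determinant computation below cubic in the worst case. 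Verifying that the matroid-intersection and knapsack checks are genuinely lower-order, and hence do not affect the stated bound, is a secondary detail that I would confirm for the specific constraint families introduced earlier.
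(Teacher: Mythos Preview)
Your proposal is correct and follows the same decomposition as the paper: total time equals (number of oracle calls) $\times$ (cost per oracle call), with the per-call cost $\O(|\V|^3)$ coming from the log-determinant/Cholesky evaluation of mutual-information marginals, and the oracle-call count $\O(\frac{|\V|}{\eta^2}\log^2\frac{|\V|}{\eta})$ coming from the nested threshold loops. The only difference is one of detail: the paper simply cites \cite{badanidiyuru2014fast} for the oracle-call count, whereas you derive it explicitly by bounding each of the $\rho$-loop, the $\tau$-loop, and the sweep over $\V$. Your version is therefore more self-contained, and your final simplification $\log|\V|\le \log\frac{|\V|}{\eta}$ for $\eta\le 1$ is exactly what is needed to match the stated bound.
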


\begin{proof}
    For computing marginal gains of mutual information, it takes $\O(|\V|^3)$ runs. If we denote each call of calculating the mutual information gain as an oracle call, it takes $\O(\frac{|\V|}{\eta^2} \log^2 \frac{|\V|}{\eta})$ calls to get the final solution \cite{badanidiyuru2014fast}. Therefore, the computational complexity is $\O(\frac{|\V|^4}{\eta^2} \log^2 \frac{|\V|}{\eta})$.
\end{proof}

\section{Simulation Results}
\label{sec: sim}

In this section, we study a problem to compare the greedy solution with an optimal solution. The optimal solution is computed by enumerating all combinations of deployments. Consider a $P \times Q$ 2D space. The intermittent deployment we build here is one instance of our general form. Specifically, we use $\I_{21}$, $\I_{23}$, and the general knapsack constraints. Specifically, we put a constraint $L(t)$ on the number of deployed robots for $t \in \T$, a constraint $L$ on the total number of non-zero deployments, and a constraint on the cost of robots used in each time. We denote by $R$ the number of robots and set $L(t) = R, \forall t \in \T$. Finally, we get the number of combinations/cases that we need to consider when evaluating an optimal solution as $\binom{T}{1} \binom{R(PQ + 1)}{R} + \ldots + \binom{T}{L} \binom{R(PQ + 1)}{R}^L$. First, since there are $T$ times available and we can collect samples at a maximum of $L$ different times, there are $\binom{T}{\ell}, \forall 1 \le \ell \le L$ combinations. Then, if we can deploy robots at $\ell$ different times, we have $\binom{R(PQ + 1)}{R}^\ell$ combinations as there are $\binom{R(PQ + 1)}{R}$ combinations in each of $\ell$ times. $(PQ+1)$ is the number of combinations for every robot since every robot can be deployed to one of the $P \cdot Q$ positions and also can be not deployed. To make the problem tractable when computing an optimal solution, we simulate the problem with $\T \in \{4, \ldots, 8\}$, $L \in \{2, \ldots, 4\}$, $\R \in \{2, \ldots, 4\}$, $P \in \{3, \ldots, 5\}$ and $Q \in \{3, \ldots, 5\}$. For the environmental process, we use the SE kernel for both temporal and spatial kernels. The cost of every robot $r$ is the multiplication of the traveling cost with a weight factor $w_r$, which is a random number generated in $(0,1)$. In the simulation, we first collect training samples from an Gaussian mixture model (GMM) \cite{williams2017generalized}, and then train the GP using these samples. Finally, we make the deployment strategy based on the GP prediction. The 2D space is $200 \times 200$. The GMM is modeled as a linear combination of fixed basis functions and time-varying weights. In the simulation, we use 5 different basis functions. So, the output the GMM is $\varphi(x,y,t) = \sum_{i=1}^5 w_i(t) b_i(x,y) = \w(t)^\trs \b(x,y)$, where $w_i(t) \in \RR$ is a weight and $b_i(x,y) \in \RR$ is the output of a basis function. $\w(t) \in \RR^5$ is the stacked weights at time $t$ and $\b(x,y) \in \RR^5$ is the stacked basis functions for the location $(x,y)$. The dynamics of weights are $\dot{\w} = A \w + \sigma(t)$, where $A = -I_{5 \times 5}$ and $\sigma(t) \in \RR^5$ is a Gaussian noise. The initial settings of weights are $\{5,5,3,8,4\}$ and the initial locations of these basis functions are $(100, 100), (60, 80), (40, 30), (160, 160), (160, 30)$.

We run simulations 100 times under the settings described above. \figref{fig: ratio} shows the optimality ratios of the greedy solution to an optimal solution for different problem sizes. We denote the problem size as $(L \cdot R \cdot P \cdot Q)$. As we use two matroid constraints $\I_{21}$, $\I_{23}$ to constrain the time and a general knapsack constraint to constrain the cost, then $p=2$ and $\ell=1$. Also, we set the tunable parameter $\eta = 0.1$ in \algref{alg: 2}. Therefore, the optimality ratio of the greedy method is $\frac{1}{(1+\eta)(p + 2 \ell + 1)} = \frac{1}{(1+\eta)(2+2+1)} \approx 18\%$ according to \thmref{thm: optimality_ratio}. In \figref{fig: utility}, we show the mutual information of the greedy solution for different problem sizes. Since different simulations may have the same problem size, we here use the average of different utilities/mutual information if this case happens. From the result, we can see that as the set $\D$ becomes big, the mutual information become small. An intuitive example to show the connection between a deployment strategy and a GMM environment can be seen in \figref{fig: intermittent_sensing_example}. Any selected element from the ground set defines a sensing location and time, and all selected elements make the deployment strategy set. We also direct the reader to the attached video for deeper insight.

\begin{figure}[!tbp]
    \centering
    \includegraphics[width=0.8\linewidth]{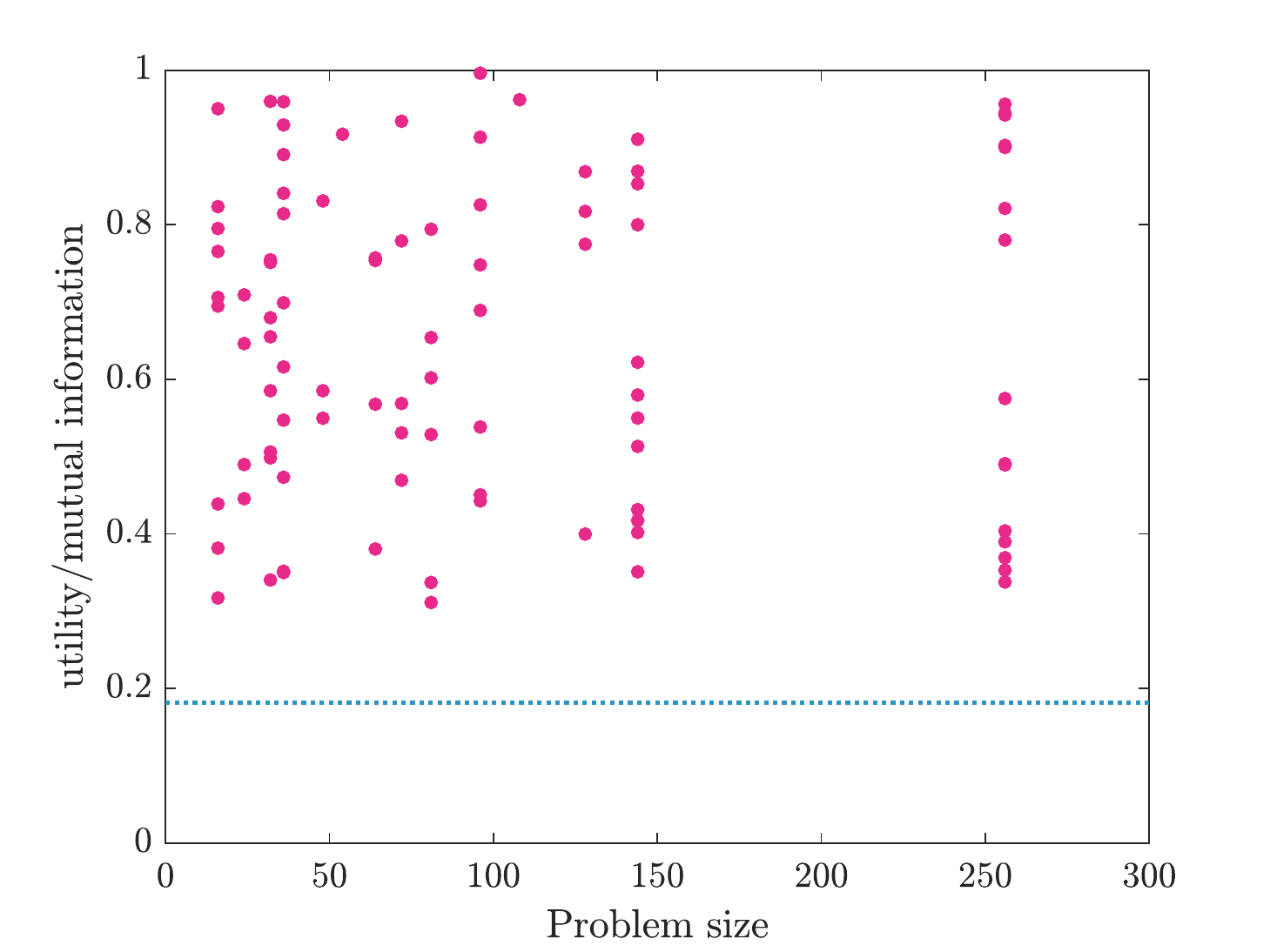}
    \caption{The optimality ratio of the greedy method with respect to different problem sizes.}
    \label{fig: ratio}
\end{figure}

\begin{figure}[!tbp]
    \centering
    \includegraphics[width=0.8\linewidth]{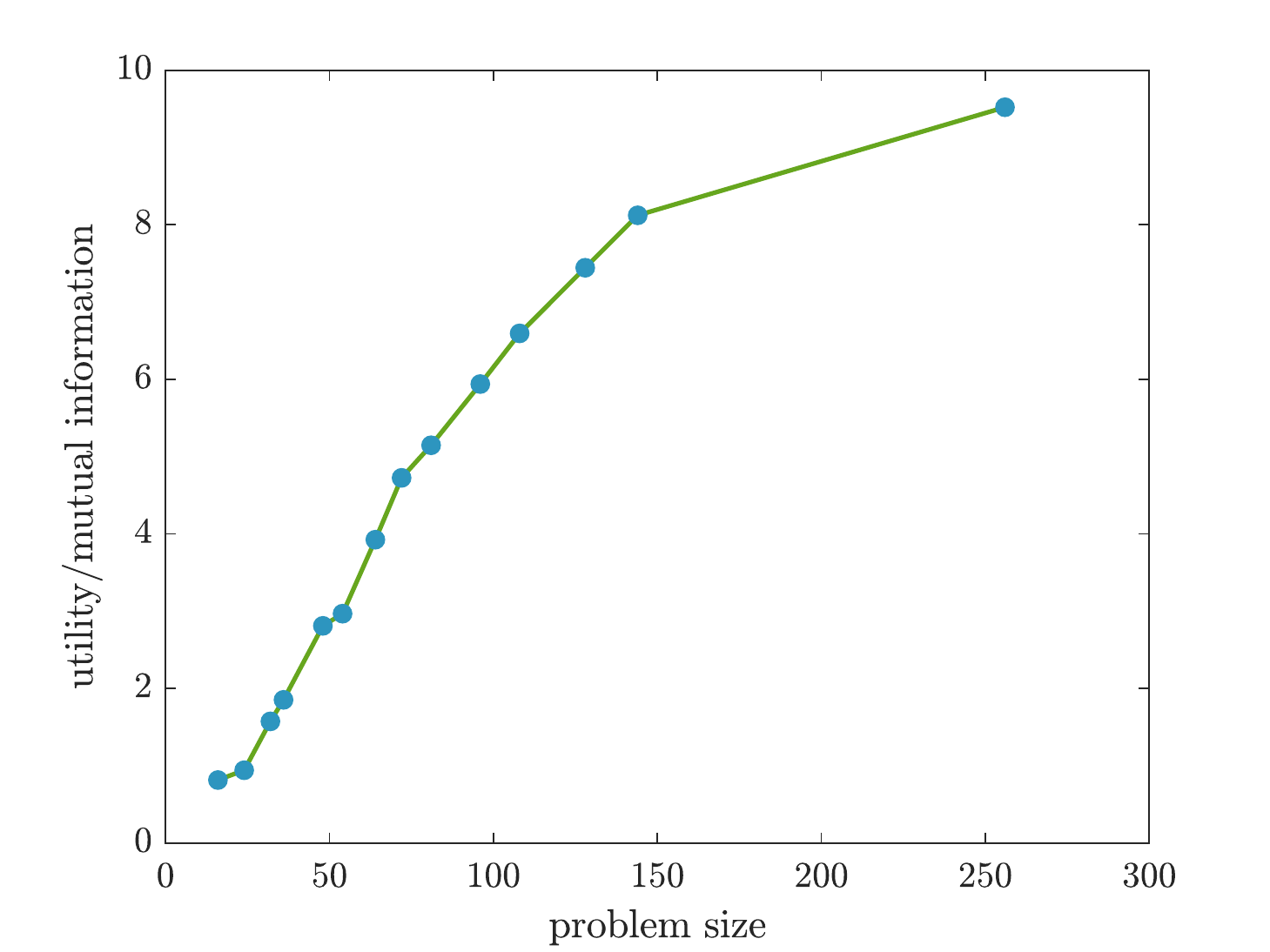}
    \caption{The objective values with respect to different problem sizes.}
    \label{fig: utility}
\end{figure}

\section{Conclusions and Future Work}
\label{sec: conclusion}

In this paper, we proposed a new intermittent deployment problem and demonstrated how to use matroid and knapsack constraints to model different instances of the problem. We also illustrated how to utilize a general submodular maximization method to solve our problem. Finally, the effectiveness of the solutions was demonstrated by Monte Carlo simulations. Directions for future work include performing field experiments with intermittent deployments and exploring the reduction of computational complexity.

\bibliography{ref}

\end{document}